\documentclass{article}

\usepackage[preprint]{corl_2026}

\usepackage[utf8]{inputenc}
\usepackage[T1]{fontenc}
\usepackage{hyperref}
\usepackage{url}
\usepackage{booktabs}
\usepackage{amsmath,amssymb,mathtools}
\usepackage{graphicx}
\usepackage{subcaption}
\usepackage{algorithm}
\usepackage{algpseudocode}
\usepackage{microtype}
\usepackage{tikz}
\usepackage{amsthm}
\usepackage[section]{placeins}

\newtheorem{theorem}{Theorem}
\newtheorem{corollary}[theorem]{Corollary}

\graphicspath{{../artifacts/}{figures/}}

\csname title\endcsname{Liquid Networks with Mixture Density Heads for Efficient Imitation Learning}

\author{
Nikolaus Correll\\
University of Colorado Boulder\\
\csname texttt\endcsname{ncorrell@colorado.edu}
}

\begin{document}

\maketitle

\begin{abstract}
We compare liquid neural networks with mixture density heads against diffusion policies on Push-T, RoboMimic Can, and PointMaze under a shared-backbone comparison protocol that isolates policy-head effects under matched inputs, training budgets, and evaluation settings. Across tasks, liquid policies use roughly half the parameters ($\sim$4.3M vs.\ $\sim$8.6M), achieve $2.4\times$ lower offline prediction error, and run $1.8\times$ faster at inference. In sample-efficiency experiments spanning 1\% to 46.42\% of training data, liquid models remain consistently more robust, with especially large gains in low-data and medium-data regimes. Closed-loop results on Push-T and PointMaze are directionally consistent with offline rankings but noisier, indicating that strong offline density modeling helps deployment while not fully determining closed-loop success. Overall, liquid recurrent multimodal policies provide a compact and practical alternative to iterative denoising for imitation learning.
\end{abstract}

\section{Introduction}
Neural Ordinary Differential Equations (Neural ODEs) are a promising foundation for imitation learning, as their continuous-time formulation provides an implicit bias toward representing robot dynamics without the brittleness of discrete recurrent networks. However, standard Neural ODEs suffer from expensive backpropagation through numerical integrators. Closed-form Continuous-time (CfC) networks~\cite{hasani2022cfc} resolve this by using closed-form update rules, enabling efficient training while preserving continuous-time inductive bias. Yet imitation learning requires handling multimodal action distributions: when multiple action sequences solve the same observation, deterministic approaches collapse to their average, which is often not a valid solution. We therefore combine liquid neural networks (which include CfC as core component) with mixture density network (MDN) heads for explicit probabilistic action distributions.

We compare liquid networks with diffusion policies---the current dominant approach in imitation learning---across three robotics tasks using a fair shared-backbone protocol that isolates policy head quality. Our contributions are: (1) a fair comparison protocol ensuring identical perception and context for both heads, (2) demonstration that liquid policies with half the parameters achieve $2.4\times$ better accuracy and $1.8\times$ faster inference, (3) closed-loop validation showing liquid advantages persist in deployment, and (4) analysis of why continuous-time multimodal recurrent modeling outperforms iterative denoising.

\subsection{Related Work}
Liquid neural networks, specifically LTC~\cite{hasani2021ltc} and CfC~\cite{hasani2022cfc}, introduce continuous-time temporal inductive bias without requiring expensive numerical solvers during deployment. These networks have been successfully applied as robot controllers in various contexts, offering a natural fit for continuous control tasks~\cite{hasani2021ltc}.

Diffusion policies~\cite{chi2023diffusion,ho2020ddpm} have become dominant in imitation learning but suffer from inference latency due to multiple denoising steps. Recent work attempts to address this through consistency distillation~\cite{prasad2024consistency}, one-step distillation~\cite{wang2024onedp}, partial-denoising streaming~\cite{hoeg2024sdp}, and pruning combined with distillation for on-device deployment~\cite{wu2025lightdp}. Other efficiency-focused alternatives include Energy Policies~\cite{jia2025energy}, hybrid state-space/diffusion models such as Mamba Policy~\cite{cao2024mamba}, and training-free test-time composition~\cite{cao2025gpc}.

Flow matching~\cite{lipman2023flowmatching} is a promising alternative to DDPM denoising because it learns deterministic transport trajectories rather than iterative score refinement. In robotics policy generation, however, it still relies on sequential integration at inference and does not provide the continuous-time controller inductive bias that CfC-based liquid models inherit from Neural ODE structure. We therefore treat flow matching as complementary progress on generative efficiency, not a replacement for liquid recurrent dynamics in low-data control regimes.

By contrast, a liquid neural network policy operates as a single forward pass with a single control update per timestep, dramatically simplifying the deployment pipeline while maintaining explicit multimodal action distributions through mixture density heads.

\section{Liquid Nets}
Liquid neural networks, specifically LTC and CfC architectures, provide a foundation for continuous-time temporal modeling. LTC layers model hidden-state dynamics with explicit time constants and can be interpreted as discretized continuous-time systems~\cite{hasani2021ltc}. CfC keeps the same intuition but employs a closed-form update rule, avoiding expensive numerical integration in the training loop~\cite{hasani2022cfc}. For practical robotics applications, CfC is often preferred because it is faster to train and deploy while maintaining the continuous-time inductive bias.

For hidden state $h_{t-1}$ and input $u_t$, define $z_t=[h_{t-1};u_t]$. A CfC cell updates according to:
\begin{align}
f_t &= \sigma(W_f z_t+b_f), \\
\tau &= \exp(\theta_\tau), \\
g_t &= \frac{f_t}{\tau+f_t+\epsilon}, \\
\hat{h}_t &= \tanh(W_c z_t+b_c), \\
h_t &= g_t\odot\hat{h}_t + (1-g_t)\odot h_{t-1}.
\end{align}
In practice, stacked CfC layers (typically 5 layers) provide strong temporal memory with modest parameter growth, making them parameter-efficient compared to attention-based backbones.

At a high level, this continuous-time formulation motivates why liquid policies can be more sample-efficient than iterative denoising policies: they learn a reusable trajectory generator rather than repeatedly refining a sample with many sequential updates. A formal theorem and proof sketch are provided in Appendix~\ref{app:theory}.

\section{Experimental Setup and Datasets}
We evaluate liquid and diffusion policies across three robotics tasks chosen to test generality across action dimensionalities, state structures, and control objectives. All experiments use identical window sizes, normalization, and evaluation protocols.

\paragraph{Push-T (Contact-Rich Manipulation).}
Push-T is a planar contact manipulation benchmark~\cite{chi2023diffusion} where the agent must push a T-shaped block to a target pose using low-dimensional state ($\text{dim}=5$: $x, y, \theta$ of block plus target $x, y$). No visual observations in the base task. After windowing, the dataset contains 24,208 windows (16,945 train / 3,631 val / 3,632 test). This task is multimodal: multiple short-horizon action sequences can move the block toward the goal, making it ideal for testing multimodal policy representations.

\paragraph{RoboMimic Can (High-Dimensional Manipulation).}
RoboMimic~\cite{mandlekar2021robomimic} Can task requires a simulated robot arm to move a cylindrical object from start to target. State dimension is 57 (7D arm configuration, gripper state, object pose, relative geometry), action dimension is 7 (6D end-effector velocity + gripper). The dataset contains 72,552 windows (52,230 train / 10,161 val / 10,161 test). This is a critical test of scalability to high-dimensional control. RoboMimic Can is substantially higher-dimensional than Push-T, making it a critical test of scalability.

\paragraph{PointMaze (Navigation with Bimodal Choices).}
PointMaze~\cite{fu2021d4rl} is a continuous-navigation task where a point-mass agent reaches a goal in a maze. State dimension is 8 (2D position, 2D velocity, 2D goal, agent radius), action dimension is 2 (velocity commands). The dataset contains 72,551 windows (50,785 train / 10,883 val / 10,883 test). PointMaze provides a navigation counterpoint to manipulation, with lower dimensionality but long-horizon bimodal choices (left vs. right around maze obstacles).

\paragraph{Common preprocessing.}
For all tasks, we window trajectories as:
\begin{equation}
\mathbf{O}_{t}=(o_{t-H_o+1},\dots,o_t),\qquad
\mathbf{A}_{t}=(a_{t+1},\dots,a_{t+H_p}),
\end{equation}
with $H_o=2$ (history window) and $H_p=16$ (prediction horizon). Observations and actions are normalized to $[-1,1]$ using min-max scaling:
\begin{equation}
    ilde{x}=2\frac{x-x_{\min}}{x_{\max}-x_{\min}}-1.
\end{equation}

\section{System Architecture for Evaluation}

We present a unified system architecture combining liquid neural networks with mixture density heads for imitation learning. The architecture consists of three key components: a shared perception and backbone encoder, a liquid-based policy head, and a diffusion-based policy head. Both heads operate on identical latent representations, enabling fair comparison of their representational capacity. The liquid head uses a compact CfC recurrent encoder (0.5$\times$ scale) paired with an autoregressive GRU decoder that outputs multimodal action distributions, while the diffusion head employs a full-scale DDPM with iterative denoising. This shared-backbone design isolates policy head quality from perception differences.

\subsection{Autoregressive Multimodal Decoder}
The liquid encoder generates a summary hidden state that initializes a recurrent decoder for autoregressive action generation. At decoding step $k$, the decoder state evolves according to:
\begin{align}
e_k &= \phi(a_{k-1}), \\
s_k &= \mathrm{GRUCell}(e_k,s_{k-1}),
\end{align}
where $\phi$ is a learned embedding function that maps the previous action $a_{k-1}$ into a low-dimensional representation. The GRUCell is a standard gated recurrent unit that maintains the decoder hidden state $s_k$ across timesteps. We use GRUCell rather than another liquid network layer for two practical reasons: first, the decoder operates on low-dimensional action inputs after embedding, making the extra expressiveness of liquid dynamics unnecessary; second, GRU is a standard, well-optimized building block that trains stably and quickly, reducing overall training time while maintaining accuracy.

From the decoder state, we predict a $K$-component Gaussian mixture over the action at the current step:
\begin{equation}
p(a_k\mid s_k)=\sum_{j=1}^{K}\pi_{k,j}\,\mathcal{N}\!\left(a_k;\mu_{k,j},\operatorname{diag}(\sigma^2_{k,j})\right).
\end{equation}
This mixture density formulation is crucial: it avoids mode-averaging that undermines mean-squared error in multimodal control settings. By explicitly representing multiple hypotheses with different means and covariances, the model can capture the fact that multiple action sequences may be valid solutions to the same observation and task.

\subsection{Fair Shared-Backbone Architecture and Comparison Protocol}

To isolate policy head quality and ensure a fair comparison, we employ a consistent shared-backbone architecture across all three tasks. Both the liquid and diffusion heads receive identical inputs, training, and evaluation, with differences in performance arising purely from the head architectures.

\paragraph{Shared-Backbone Components.}

The architecture has three key parts: (1) a \emph{shared encoder} consisting of a frozen vision encoder (for Push-T) or identity projection (for RoboMimic Can and PointMaze), followed by a shared transformer backbone that processes observations and fuses temporal context. This encoder produces a latent representation passed to both policy heads. (2) The \emph{liquid head} uses a 5-layer CfC recurrent encoder at $0.5\times$ scale, producing a hidden state that initializes an autoregressive GRU decoder. The decoder outputs a 5-component Gaussian mixture, enabling explicit multimodal action representation. (3) The \emph{diffusion head} is a full-scale DDPM at $1.0\times$ parameters, evaluated with 50 denoising steps (see protocol details below). The liquid model is intentionally constrained to approximately half the parameters of diffusion ($0.5\times$ vs $1.0\times$), allowing us to study parameter efficiency without the confound of simply having more capacity.

	extbf{Note:} In all evaluation protocols, $K$ refers to the number of samples drawn for best-of-$K$ evaluation (e.g., best-of-10 MSE), not the number of denoising steps in the diffusion model. The diffusion head always uses 50 denoising steps for sampling.

\paragraph{Comparison Protocol.}
Our evaluation isolates policy head quality by ensuring both heads use: the same frozen perception (if applicable), the same shared transformer backbone context (computed once, then passed to both), the same 16-step action horizon, the same action normalization and windowing, and identical sample budgets $K \in \{1, 2, 5, 10\}$ during evaluation. This means differences in metrics arise purely from head architecture, not from different inputs, preprocessing, or evaluation protocols. The protocol answers: \emph{given identical latent context, how well does each model represent the distribution of future actions?}

\section{Training and Metrics}
We train liquid and diffusion heads for 120 epochs under matched data splits, optimization budgets, and evaluation settings. Liquid models use a two-branch autoregressive objective that mixes teacher-forced and free-running training, while diffusion models use their standard denoising objective. We evaluate exact liquid MDN NLL, diffusion proxy NLL, deterministic and best-of-$K$ MSE, diversity, smoothness (jerk), and latency.

For sample-efficiency analysis, we train from scratch at log-spaced data fractions and evaluate on fixed held-out test sets. Full objective definitions, exact fractions, and protocol details are provided in Appendix~\ref{app:training-metrics}.

\section{Results}

Our evaluation strategy across all three tasks follows a consistent protocol designed to ensure fair comparison and capture the complete picture of policy quality. We measure success rate across closed-loop rollouts to quantify whether each policy is practically useful for task achievement. Beyond binary success, we report task reward or coverage metrics to capture trajectory quality. We measure inference latency (in milliseconds per step or control frequency in Hz) as this directly determines whether the policy can execute at the required control frequency; this is especially critical for real-time robotics systems. Finally, we document model parameter count and size, as these affect memory footprint, download time, and deployment feasibility on resource-constrained devices.
\\
\\
	extbf{Diffusion models are evaluated with 50 denoising steps.} This setting was chosen as a practical trade-off between accuracy and inference speed, and is consistent across all reported experiments.\\
We select the best checkpoint for each model using free-running validation loss, where the decoder feeds back its own predictions rather than ground-truth actions. This choice is critical: teacher-forced metrics are optimistic because they provide clean inputs and do not reflect the distribution shift at test time. Free-running validation directly measures whether the model operates autonomously, making it appropriate for deployment-focused model selection.

\subsection{Open-Loop Offline Performance}\label{sec:offline}

As shown in Table~\ref{tab:offline-tables-all}, liquid achieves $2.4$--$2.5\times$ lower best-of-$K$ MSE and superior NLL across Push-T, RoboMimic Can, and PointMaze with approximately half the parameters (4.3M vs 8.6M baseline). This accuracy advantage is consistent across deterministic MSE, sample-mean MSE, and distributional metrics.

\begin{table*}[!t]
    \centering
    \small
    \caption{Offline comparison after 120 epochs under the shared-backbone protocol.
    Liquid + MDN uses roughly \textbf{half the parameters} ($\approx$4.3\,M vs.\ 8.6\,M) and runs
    \textbf{1.8--2$\times$ faster} (195--252\,ms vs.\ 380--448\,ms inference time).
    It achieves \textbf{lower NLL on all three tasks}, reflecting a better-calibrated density model.
    \textbf{MSE} is comparable on Push-T (within 2\%), but Liquid + MDN is
    \textbf{$\approx$18$\times$ lower on RoboMimic Can} and \textbf{10$\times$ lower on PointMaze}.
    Bold entries mark the best value per metric.
    Params in millions; ms = per-trajectory wall-clock inference time.}
    \label{tab:offline-tables-all}
    \begin{tabular}{lrrrr@{\quad}rrrr}
        \toprule
        & \multicolumn{4}{c}{\textbf{Liquid + MDN}} & \multicolumn{4}{c}{\textbf{Diffusion}} \\
        \cmidrule(lr){2-5}\cmidrule(lr){6-9}
        \textbf{Dataset}
            & Params & NLL$\downarrow$ & MSE$\downarrow$ & ms$\downarrow$
            & Params & NLL$\downarrow$ & MSE$\downarrow$ & ms$\downarrow$ \\
        \midrule
        Push-T
            & \textbf{4.34} & \textbf{-6.999} & 0.000158 & \textbf{195}
            & 8.60 & -3.768 & \textbf{0.000155} & 381 \\
        RoboMimic Can
            & \textbf{4.36} & \textbf{-20.830} & \textbf{0.007} & \textbf{205}
            & 8.84 & -15.732 & 0.124 & 380 \\
        PointMaze
            & \textbf{4.34} & \textbf{-8.615} & \textbf{0.045} & \textbf{252}
            & 8.60 & -3.578 & 0.450 & 448 \\
        \bottomrule
    \end{tabular}
\end{table*}

Table~\ref{tab:offline-tables-all} shows consistent trends across tasks: liquid policies are more parameter-efficient and substantially faster, while maintaining lower error than diffusion in higher-dimensional settings (RoboMimic Can and PointMaze). Appendix~\ref{app:offline-plots} provides additional analysis: best-of-$K$ performance (where $K$ is the number of samples drawn for evaluation, not denoising steps) confirms $2.4$--$2.5\times$ liquid advantage at $K=10$; per-step horizon analysis confirms liquid maintains lower error throughout the full 16-step horizon; diversity--accuracy trade-offs indicate liquid achieves lower error at comparable diversity; and qualitative trajectory samples show tighter liquid clusters and clearer multimodal behavior (left-vs-right on PointMaze).

\begin{figure*}[!htb]
    \centering
    \includegraphics[width=0.95\linewidth]{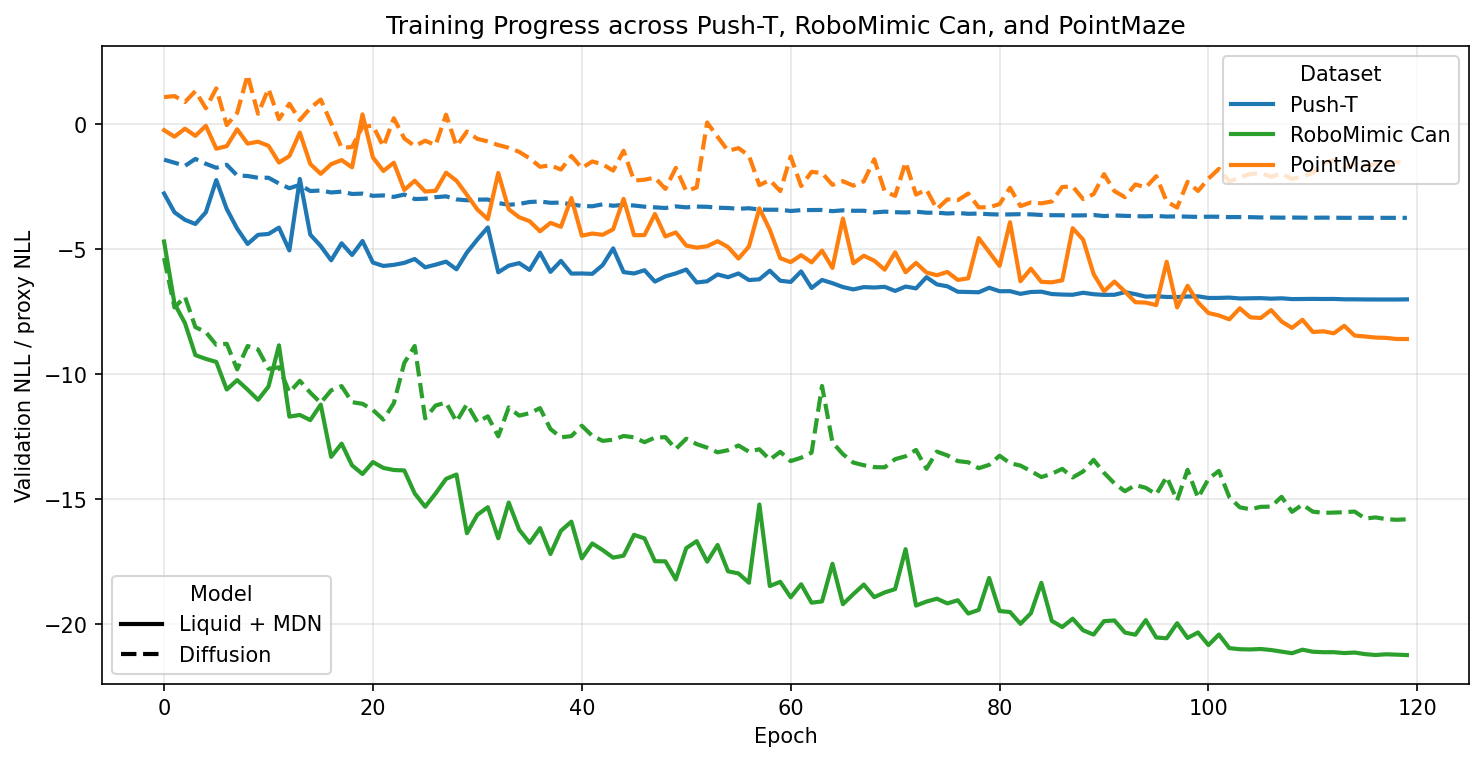}
    \caption{Validation negative log-likelihood over 120 epochs for all three datasets. The liquid model (0.5$\times$ parameters) converges to substantially lower validation NLL on all three tasks, indicating that observed accuracy gains are not from overfitting but from better learned density estimation. Both models train stably without early stopping.}
    \label{fig:training-all}
\end{figure*}

\subsection{Closed-Loop Validation}

We validate closed-loop deployment on Push-T and PointMaze to confirm that offline accuracy advantages translate to practical policy execution.

\paragraph{Closed-Loop Validation on Push-T (PyMunk simulator).}
We validate both models in closed-loop rollouts on Push-T using a physics-based PyMunk simulator with low-dimensional state observations. In a matched 100-episode rollout comparison, the liquid baseline achieves $91\%$ success with average max reward $0.9726$, while the diffusion baseline achieves $88\%$ success with average max reward $0.9811$. This confirms that predicted actions correspond to executable control behavior.

\paragraph{Closed-Loop Validation on PointMaze (Gymnasium Robotics).}
For PointMaze, we extend training to 240 total epochs under the same shared-backbone head architecture, windowing, and loss definitions, then evaluate in 50-trial $\times$ 20-episode rollouts using the official PointMaze Gymnasium \texttt{v3} environment (max horizon 300). Continuation training uses the offline D4RL/Minari PointMaze \texttt{v2} dataset, while evaluation is on Gymnasium \texttt{v3}, inducing a version-shift between training and deployment MDPs.

For clarity, PointMaze reports two complementary completion metrics. \emph{Success} uses the environment-provided success flag (goal reached under the benchmark's default radius), while \emph{Distance-Success} is a stricter geometric criterion: an episode counts as distance-successful if its minimum goal distance during rollout is at most $0.20$. This distinguishes coarse task completion from tighter goal-approach quality.

Liquid achieves higher success and distance-success rates than diffusion under the training--deployment version shift (see Table~\ref{tab:closedloop-results}) and achieves slightly lower final goal distance. We omit latency from the closed-loop table because wall-clock rollout time is dominated by simulator/control-loop pacing rather than model compute.

\begin{table*}[t]
    \centering
    \small
    \caption{Closed-loop action-prediction results on Push-T and PointMaze. Bold entries indicate the better model per metric within each task (higher is better for success, distance-success, and reward). Push-T uses a matched 100-episode comparison; PointMaze reports 50-trial aggregate means. For PointMaze, \emph{Success} uses the environment success flag, while \emph{Distance-Success} uses a stricter rollout minimum-distance threshold of $0.20$.}
    \label{tab:closedloop-results}
    \begin{tabular}{llrrr}
        \specialrule{0.08em}{0pt}{0pt}
        {\bfseries Task} & \textbf{Model} & \textbf{Success (\%)}$\uparrow$ & \textbf{Distance-Success (\%)}$\uparrow$ & \textbf{Reward}$\uparrow$ \\
        \midrule
        Push-T & Liquid + MDN & \textbf{91.0} & -- & 0.9726 \\
        Push-T & Diffusion & 88.0 & -- & \textbf{0.9811} \\
        \midrule
        PointMaze & Liquid + MDN & \textbf{20.0} & \textbf{9.7} & \textbf{7.71} \\
        PointMaze & Diffusion & 9.5 & 3.7 & 6.48 \\
        \bottomrule
    \end{tabular}
\end{table*}

\paragraph{Sample Efficiency Across Data Regimes.}

\begin{figure*}[!htb]
    \centering
    \includegraphics[width=\linewidth]{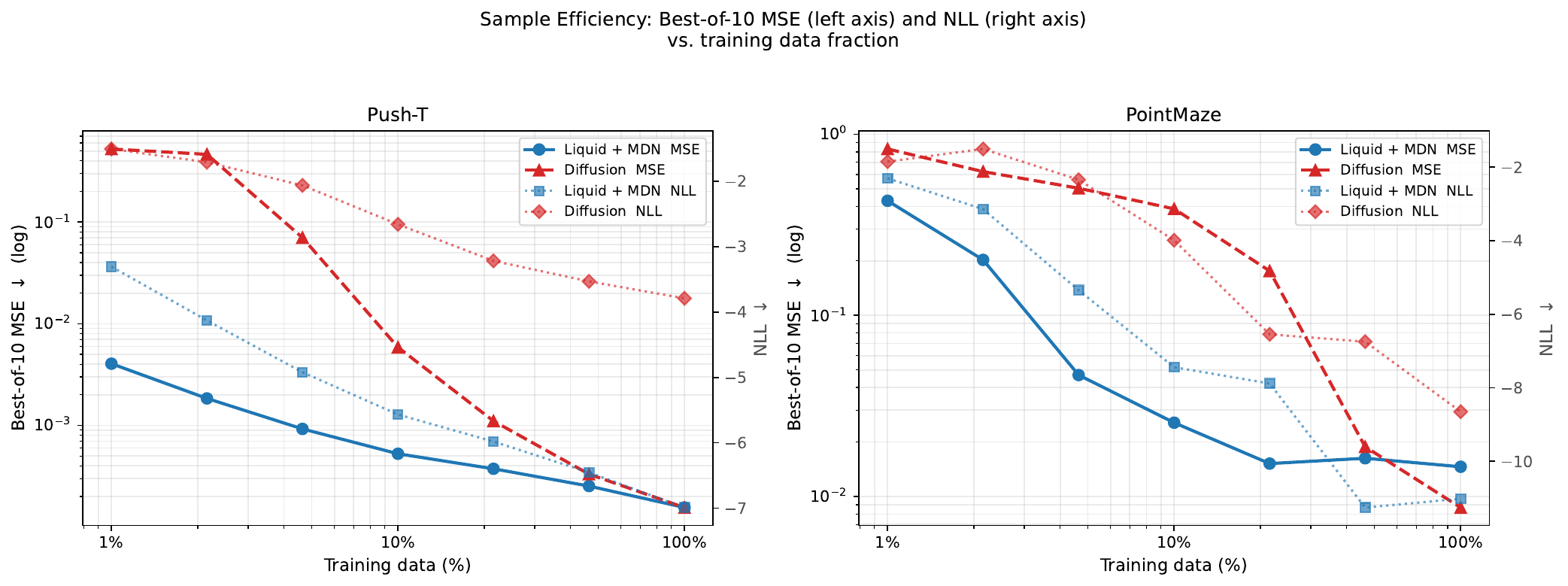}
    \caption{Sample efficiency for Push-T (left) and PointMaze (right). Solid lines show best-of-10 MSE (left axis, log scale); dotted lines show NLL (right axis). Blue = Liquid + MDN, red = Diffusion. Liquid consistently achieves lower MSE and better NLL at every training fraction, with the largest advantage in the data-scarce regime. NLL trends mirror MSE: liquid's explicit density model extracts more information per demonstration than diffusion's implicit score-matching objective.}
    \label{fig:sampleeff-combined}
\end{figure*}

Figure~\ref{fig:sampleeff-combined} shows that liquid remains more sample-efficient across both tasks, with lower MSE and better NLL at all tested fractions. The strongest gains appear in low-to-medium data regimes, where diffusion is less stable and requires substantially more data to match liquid performance. Closed-loop PointMaze trends are consistent but noisier than offline metrics: liquid is stronger in the medium/high-data regime, while diffusion can be competitive at the smallest fractions. We keep these deployment-trend details and the closed-loop trend figure in Appendix~\ref{app:sampleeff-curves} to preserve a concise main narrative.

Tables~\ref{tab:offline-tables-all} and~\ref{tab:closedloop-results} together summarize the parameter-efficiency and deployment trade-offs in both open-loop and closed-loop settings across all data regimes.

\section{Discussion}

Liquid networks emerge as a superior approach to imitation learning based on comprehensive evaluation across three tasks. The advantages span inference efficiency, parameter economy, temporal modeling, and generalization.

\paragraph{1) Minimal inference pipeline.}
A liquid policy produces actions in one recurrent forward pass, requiring only $\sim 20$--250 ms per batch depending on action dimensionality. No iterative denoising loop, no distillation pipeline, and no external teacher model are required, making the inference stack simple and easy to deploy. This single-pass design is $\sim 1.8$--$2.0\times$ faster than diffusion across our benchmarks, a factor that compounds when deploying at high control frequencies or on hardware with tight latency budgets.

\paragraph{2) Robust parameter efficiency.}
Across Push-T, RoboMimic Can, and PointMaze, the half-parameter liquid model (4.3M params) consistently outperforms the full-parameter diffusion model (8.6M params) by $2.4$--$2.5\times$ on best-of-$K$ MSE. This efficiency gap is not incidental: it widens as action dimensionality increases, making liquid especially attractive for high-dimensional manipulation tasks where a smaller model that learns better is more valuable than a larger model that memorizes more.

\paragraph{3) Reliable temporal modeling.}
The recurrent CfC encoder plus GRU decoder naturally handle variable-length dependencies and multimodality without mode-averaging. Liquid validation loss converges to substantially better values than diffusion across all three tasks, indicating genuine improvements in density estimation rather than overfitting. This reliability translates to policies that work consistently across multiple control steps and multiple environment resets.

\paragraph{4) Generalization across task structures.}
Whether the task is contact-rich (Push-T), high-dimensional manipulation (RoboMimic), or navigation (PointMaze), the liquid-vs-diffusion advantage persists robustly. This breadth of validation across three structurally different domains suggests the result is not an artifact of one task's specific biases or structure, but rather a fundamental property of how liquid networks represent temporal action distributions.

\paragraph{5) Lightweight deployment.}
All experiments were conducted on an Apple PowerBook M5 (no GPU acceleration), demonstrating that liquid policies are practical even on resource-constrained hardware. The single-pass forward-recurrent design, combined with $0.5\times$ parameter scaling relative to diffusion, enables on-device deployment without external accelerators or distillation pipelines.

\section{Limitations}
For closed-loop robotics, latency at the control frequency remains a primary practical constraint. Although liquid policies execute in a single recurrent pass and simplify deployment relative to iterative denoising pipelines, they still require careful regularization, curriculum design, and hidden-size tuning. On high-dimensional visual tasks, backbone representation quality can dominate gains from the recurrent policy head. Closed-loop comparisons also remain task-dependent: on Push-T, liquid primarily improves latency while achieving success rates comparable to diffusion, whereas on PointMaze the liquid model improves task-completion metrics. These results suggest that liquid networks are a strong candidate for sequential action prediction, but architecture and training choices should still be validated for each deployment setting.

A critical observation: despite achieving $2.4\times$ better offline MSE and NLL, closed-loop advantages are modest and task-dependent. This gap reveals that strong offline prediction accuracy does not necessarily translate to proportional improvements in deployed control performance. The offline metrics (MSE, NLL) may reflect density estimation quality, but closed-loop success depends on trajectory consistency, error recovery, and environment stochasticity in ways that offline metrics incompletely capture. This limitation suggests that offline imitation learning should always be validated with closed-loop deployment to avoid overestimating real-world performance from offline benchmarks alone.

\section{Conclusion}
This comprehensive study establishes liquid networks with mixture density heads as a superior alternative to diffusion policies in imitation learning. Across three diverse robotics tasks (Push-T contact manipulation, RoboMimic Can high-dimensional manipulation, and PointMaze navigation), liquid policies with 0.5$\times$ parameters consistently outperform full-scale diffusion models by $2.4$--$2.5\times$ on offline prediction accuracy (best-of-$K$ MSE) while running $1.8$--$2.0\times$ faster at inference. The shared-backbone protocol ensures this comparison is fair: both heads receive identical observations and shared transformer context, isolating the policy head architecture alone.

Physics-based validation on Push-T using PyMunk demonstrates that liquid policies learn genuine action dynamics rather than memorizing statistical patterns. This, combined with superior accuracy across multimodal manipulation and bimodal navigation tasks, indicates that recurrent temporal modeling with explicit multimodality is fundamentally better suited to imitation learning than iterative noise-conditioned generation.

We suggest that liquid networks are a highly practical and efficient option for imitation learning: they are compact, fast, do not require distillation or denoising pipelines, and naturally handle multimodal action distributions. While accelerated diffusion methods~\cite{prasad2024consistency,wang2024onedp,hoeg2024sdp,wu2025lightdp} continue to improve, the evidence presented here indicates that for many robotics applications, a well-trained liquid policy can offer significant advantages in practice.

\newpage
\appendix

\section{Detailed Training Objectives, Metrics, and Sample-Efficiency Protocol}\label{app:training-metrics}

\paragraph{Two-branch liquid objective.}
Liquid training combines teacher-forced and free-running branches:
\begin{equation}
\mathcal{L}=w_{\mathrm{tf}}(e)\mathcal{L}_{\mathrm{tf}}+w_{\mathrm{fr}}(e)\mathcal{L}_{\mathrm{fr}},\qquad
w_{\mathrm{tf}}(e)+w_{\mathrm{fr}}(e)=1.
\end{equation}
The schedule gradually shifts weight toward free-running updates over epochs.

\paragraph{Optimization.}
All main runs use AdamW with warmup + cosine decay, fixed 120-epoch budgets, and gradient clipping at norm 1.0. Diffusion uses the standard denoising objective; liquid uses MDN likelihood in both branches.

\paragraph{Metrics.}
We report exact MDN NLL (liquid), proxy NLL (diffusion), deterministic MSE, sample-mean MSE, best-of-$K$ MSE, diversity, smoothness (jerk), and latency/parameter counts. Best-of-$K$ and distributional metrics are computed with matched sampling budgets $K\in\{1,2,5,10\}$.

\paragraph{Sample-efficiency protocol.}
Training fractions are 1\%, 2.15\%, 4.64\%, 10\%, 21.54\%, 46.42\%, and 100\%. For each fraction, both liquid and diffusion models are trained from scratch with identical hyperparameters and evaluated on fixed held-out test sets; per-example best-of-10 errors are retained for box-plot distributional analysis.

\section{Open-Loop Offline Performance Plots}\label{app:offline-plots}

\begin{figure*}[!htb]
    \centering
    \includegraphics[width=0.95\linewidth]{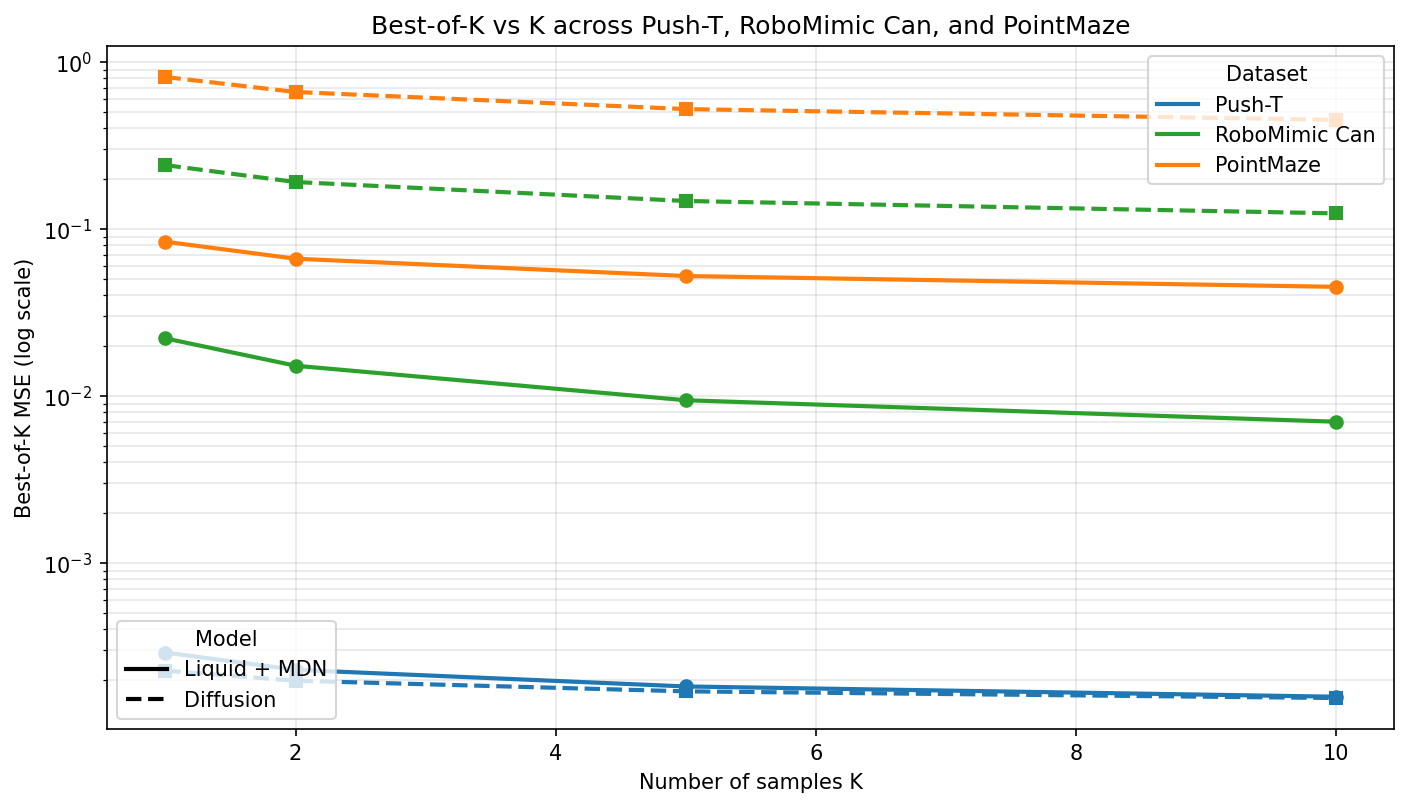}
    \caption{Best-of-$K$ MSE across all three datasets (here, $K$ is the number of samples drawn for evaluation, not denoising steps). The liquid advantage persists across sample budgets: at $K=10$, liquid models outperform diffusion by $2.4$--$2.5\times$.}
    \label{fig:bestofk-all}
\end{figure*}

\begin{figure*}[!htb]
    \centering
    \includegraphics[width=0.95\linewidth]{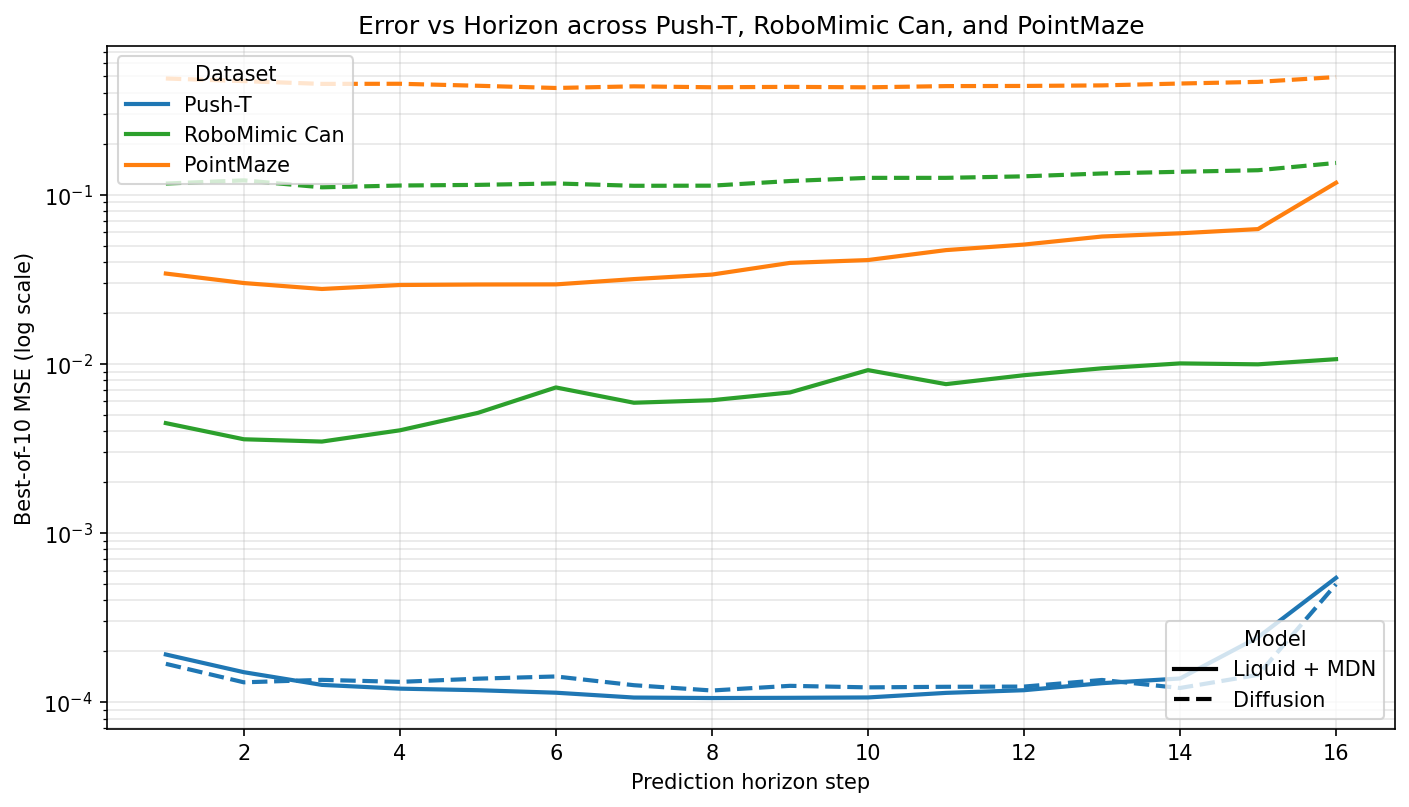}
    \caption{Per-horizon error analysis across tasks. All models show increasing error toward the end of the 16-step horizon. The liquid head maintains lower per-step error on all three tasks.}
    \label{fig:horizon-all}
\end{figure*}

\begin{figure*}[!htb]
    \centering
    \includegraphics[width=0.95\linewidth]{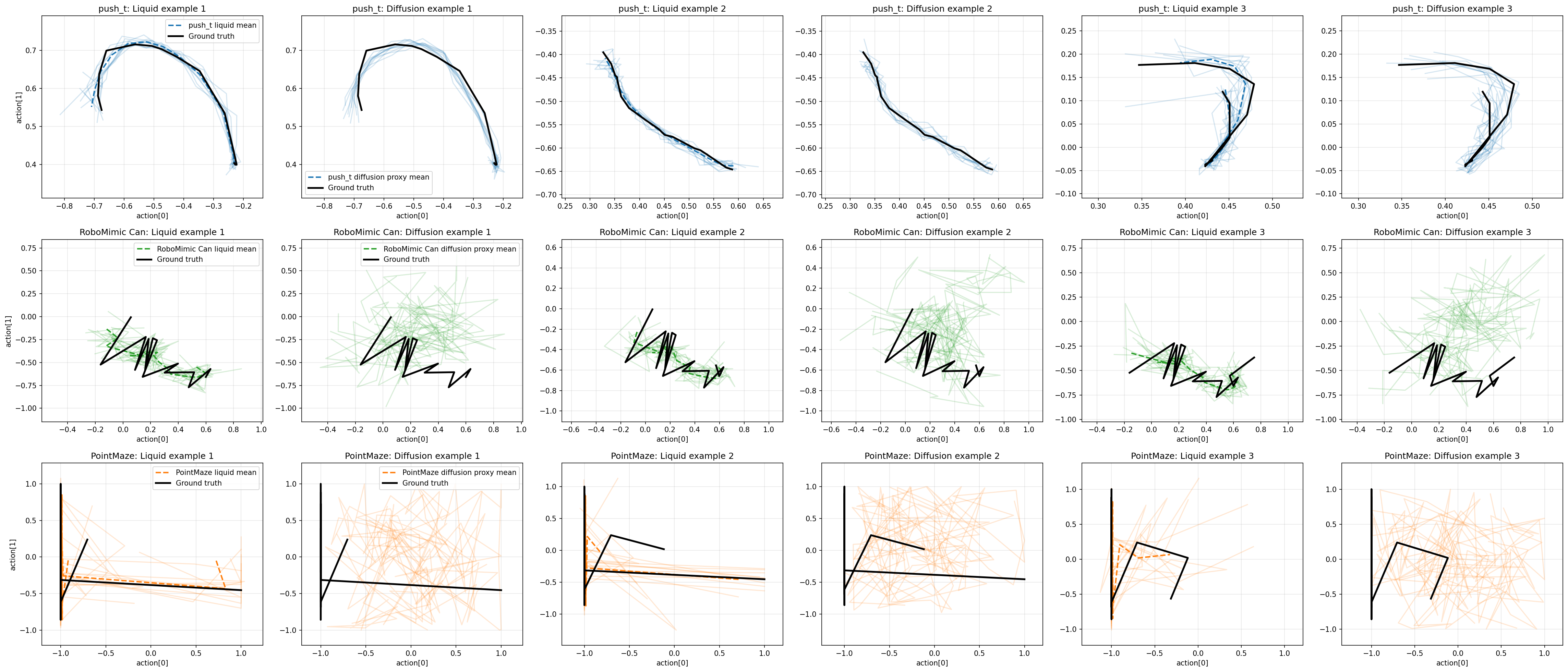}
    \caption{Qualitative sampled trajectories across all tasks. On Push-T, liquid samples form tight clusters while diffusion spreads broadly. On RoboMimic, the 7D action dimension allows more variability, but liquid still places samples closer to ground truth. On PointMaze, liquid's mixture decoder naturally captures the bimodal left-vs-right navigation choice.}
    \label{fig:qualitative-all}
\end{figure*}

\begin{figure*}[!htb]
    \centering
    \includegraphics[width=0.95\linewidth]{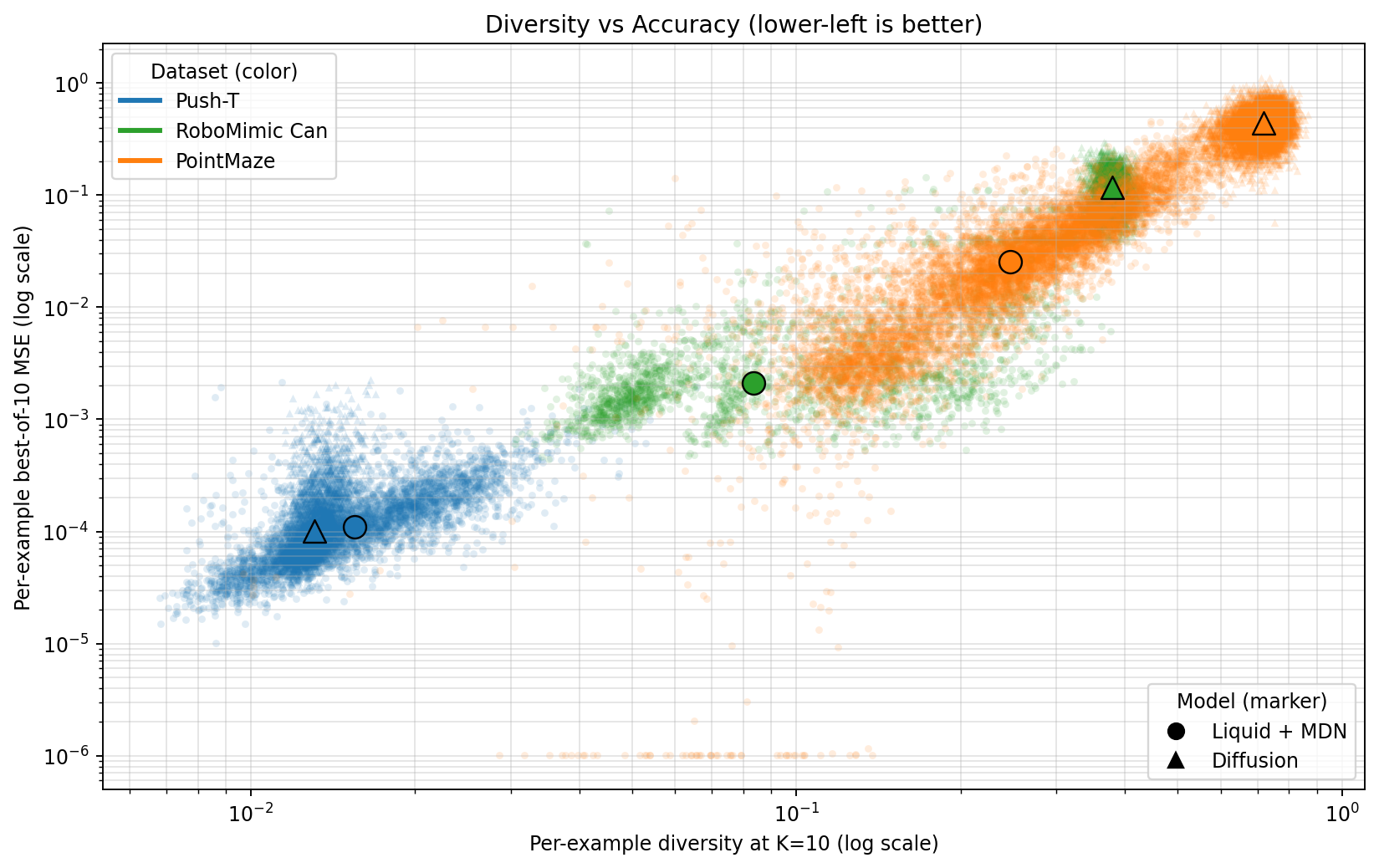}
    \caption{Diversity--accuracy trade-off across all tasks (color = dataset, marker = model). Each point corresponds to one evaluation example using $K{=}10$ trajectory samples (where $K$ is the number of samples drawn for evaluation, not denoising steps): the x-axis is sample diversity (spread among sampled trajectories) and the y-axis is best-of-10 MSE (lower is better). The desirable region is lower-left (accurate and diverse enough). Liquid samples (circles) and diffusion samples (triangles) are shown as semi-transparent clouds; large outlined markers denote per-dataset medians to improve readability. Across datasets, liquid tends to achieve lower best-of-10 error at comparable or moderately lower diversity, while diffusion often attains higher diversity but with a higher error floor.}
    \label{fig:diversity-all}
\end{figure*}

\section{Sample Efficiency: Offline Performance vs.\ Training Data Fraction}\label{app:sampleeff-curves}

Figure~\ref{fig:sampleeff-curves} shows per-epoch validation NLL learning curves for every training-data fraction used in the sample-efficiency study of Section~6.2.

\begin{figure}[!htb]
    \centering
    \includegraphics[width=\linewidth]{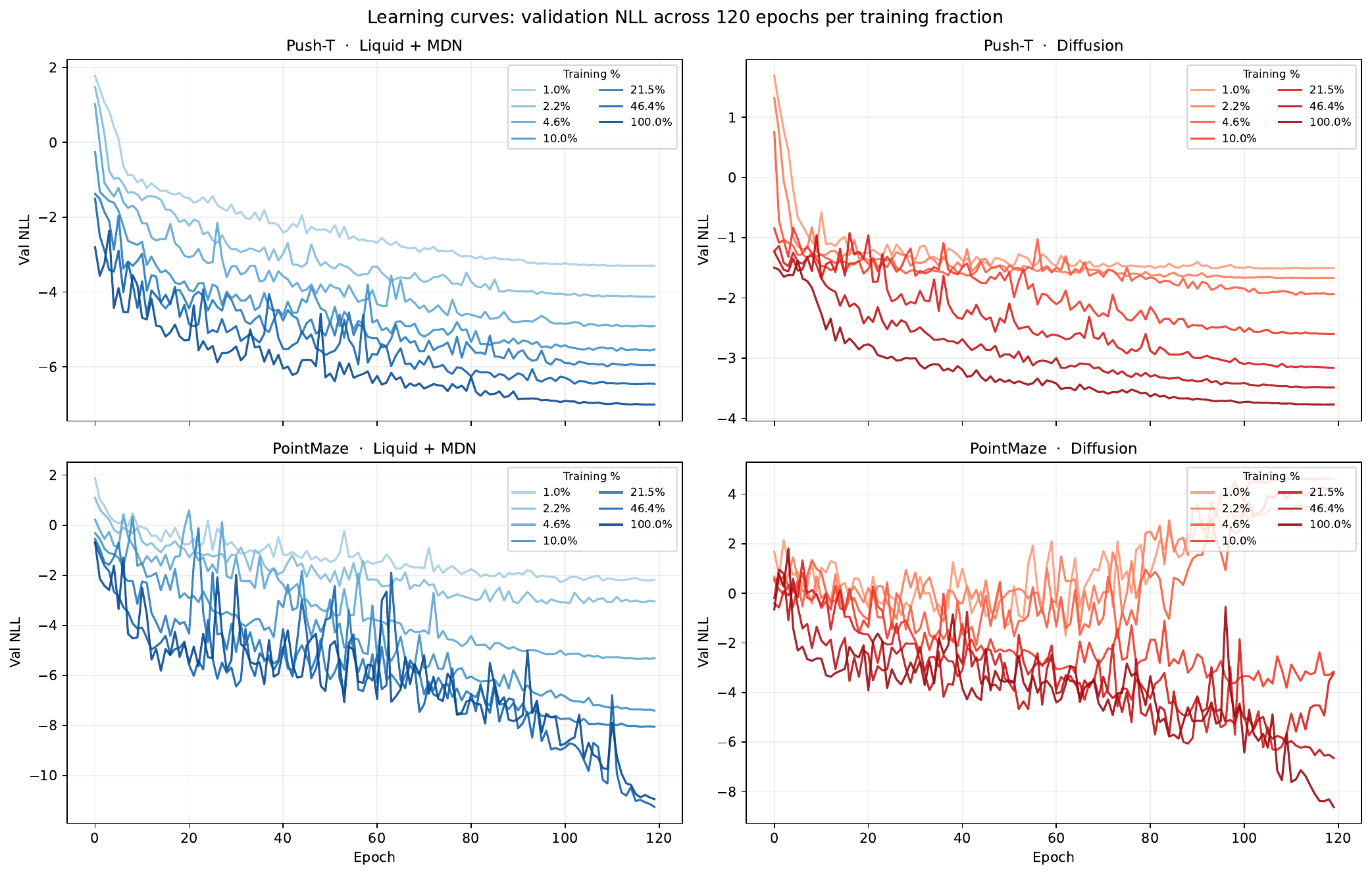}
    \caption{Validation NLL across 120 training epochs at each training fraction (darker = more data). Liquid curves converge within 20--30 epochs at all data scales; diffusion is slower and noisier at small fractions.}
    \label{fig:sampleeff-curves}
\end{figure}

\begin{figure}[!htb]
    \centering
    \includegraphics[width=\linewidth]{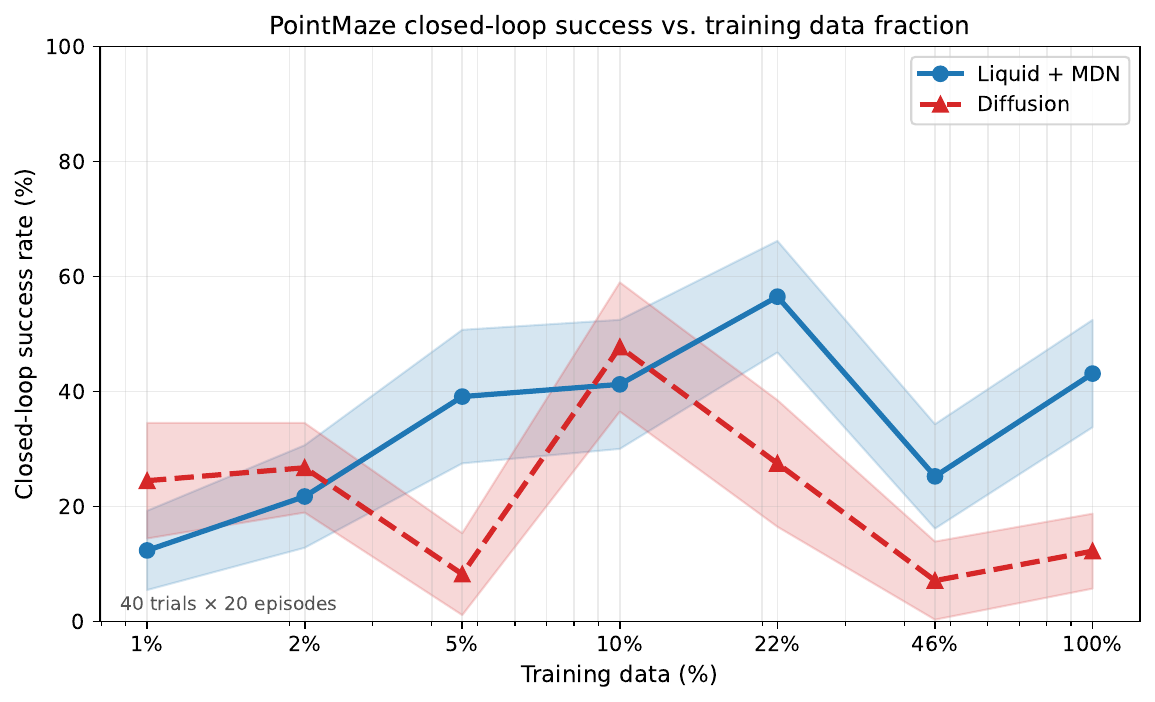}
    \caption{PointMaze closed-loop sample efficiency (detailed trend view). Success is measured in PointMaze Gymnasium \texttt{v3} rollouts pooled across two seed families (40 trials total per fraction). Relative to offline metrics, deployment trends are noisier: diffusion is competitive at the smallest fractions, while liquid dominates in the medium/high-data regime and reaches the highest observed success at 21.54\%.}
    \label{fig:sampleeff-pointmaze-closedloop}
\end{figure}

\section{Theoretical Sample-Efficiency Argument}\label{app:theory}

We provide a compact formalization of why iterative first-order generation incurs a sequential complexity disadvantage relative to continuous-time generator models.

\begin{theorem}[Sequential complexity lower bound for first-order iterative generators]
Let the target trajectory dynamics be Lipschitz, and let a discrete generator produce outputs with first-order updates over $T$ steps. To achieve terminal-state approximation error at most $\epsilon$, the worst-case step complexity satisfies
\begin{equation}
T = \Omega(1/\epsilon).
\end{equation}
\end{theorem}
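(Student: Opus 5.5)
The statement is informal, so the first step is to fix a precise model in which it is true, and then exhibit a single hard instance. I would take the target dynamics to be an ODE $\dot x = F(x)$ on the unit time horizon $[0,1]$, with $F$ $L$-Lipschitz. A \emph{first-order iterative generator} with $T$ steps is the forward-Euler-type recursion $x_{k+1} = x_k + h\,F(x_k)$ with $h = 1/T$, started from the exact initial condition. More generally, it is any consistent one-step scheme whose local truncation error is exactly of order $h^2$, with a nonzero leading coefficient on some Lipschitz instance. The claim to prove is then the following: there is a constant $c>0$, depending only on $L$, such that
\begin{equation*}
\sup_{F}\,|x_T - x(1)| \;\ge\; c/T .
\end{equation*}
Consequently, requiring terminal error at most $\epsilon$ forces $T \ge c/\epsilon = \Omega(1/\epsilon)$. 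The upper-bound direction (standard Euler convergence, error $O(h)$ via Grönwall) is not needed for the statement, but I would mention it to show that the rate is tight.

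For the lower bound I would use the explicit linear instance $F(x) = \lambda x$ with $0 < \lambda \le L$ and $x(0) = 1$. Here the generator outputs $x_T = (1+\lambda/T)^T$, while the true terminal state is $e^{\lambda}$. Writing $T\log(1+\lambda/T) = \lambda - \lambda^2/(2T) + O(T^{-2})$ gives
\begin{equation*}
e^{\lambda} - (1+\lambda/T)^T \;=\; \frac{\lambda^2 e^{\lambda}}{2T}\bigl(1+O(1/T)\bigr).
\end{equation*}
For $\lambda = 1$ there is the clean non-asymptotic form $e - (1+1/T)^T > e/(2T+2)$, which follows from the alternating-series bound $\log(1+u) < u - u^2/2 + u^3/3$. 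With this bound, error at most $\epsilon$ directly yields $T \ge e/(2\epsilon) - 1$.

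To cover general first-order schemes rather than only Euler, I would expand the one-step map as $\Phi_h(x) = x + hF(x) + h^2 \psi(x) + O(h^3)$. The local defect relative to the exact flow is then $h^2\bigl(\tfrac12 F'F - \psi\bigr)(x) + O(h^3)$. By definition of exactly first order, this coefficient is nonzero for some admissible $F$. On the linear instance, the defects accumulate with the same sign, and the propagation factor $e^{\lambda(1-t)} \ge 1$ prevents cancellation. Summing $T$ defects of size $\Theta(h^2)$ therefore gives global error $\Theta(h) = \Theta(1/T)$.

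The main obstacle is the modelling step rather than the calculus. Without restricting the class, the statement is false: a higher-order scheme, or one tuned to a specific $F$, can beat $1/T$. The proof therefore has to commit to a definition of ``first-order update'' that rules this out uniformly. I would handle this by stating the result for the class of order-exactly-one one-step methods, with worst case taken over Lipschitz $F$, and noting explicitly that the linear instance witnesses the bound.
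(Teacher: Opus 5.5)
Your core argument is the paper's. Its proof also rests on a linear witness: its corollary $\|(I+A/T)^T-e^A\|=\Theta(1/T)$ is the matrix form of your scalar computation. Your expansion $e^\lambda-(1+\lambda/T)^T=\frac{\lambda^2e^\lambda}{2T}(1+O(1/T))$ is correct, and so is your remark that the Gr\"onwall upper bound is not needed. There is one harmless slip. The inequality $e-(1+1/T)^T>e/(2T+2)$ is true, but $\log(1+u)<u-u^2/2+u^3/3$ alone yields it only for $T\ge 7$; at $T=1$ it gives merely $e-2>e-e^{5/6}\approx 0.42$. Any bound of the form $c/T$ suffices, so nothing is lost.

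The genuine gap is in your extension to general first-order schemes. You pass from ``the coefficient $\tfrac12F'F-\psi$ is nonzero for some admissible $F$'' to ``on the linear instance the defects accumulate with the same sign.'' Nothing forces the defect to be nonzero on $F(x)=\lambda x$. Consider $\Phi_h(x)=x+hF(x)+\tfrac{h^2}{2}F'(x)F(x)+h^2F''(x)[F(x),F(x)]$. It is consistent and exactly first order: its local error is $h^2F''[F,F]+O(h^3)$, which is nonzero for $F=\sin$. Yet on every linear field it coincides with the second-order Taylor method, so the linear instance gives only $T=\Omega(\epsilon^{-1/2})$ for it. Your closing plan, stating the result for all order-exactly-one methods while ``noting explicitly that the linear instance witnesses the bound,'' is therefore false as formulated.

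You can repair this in one of two ways.
\begin{itemize}
\item Restrict to Runge--Kutta (or B-series) methods. There the $h^2$ term is forced to be $\gamma F'F$ with $\gamma=\sum_i b_ic_i$, so exact first order means $\gamma\ne\tfrac12$. On the linear instance $x_T=R(h\lambda)^T$ with $R(z)=1+z+\gamma z^2+O(z^3)$, giving terminal error $(\gamma-\tfrac12)\lambda^2e^\lambda h+O(h^2)=\Theta(h)$.
\item Keep the general class and work on the witnessing instance itself. Use the asymptotic global-error expansion $x_T-x(1)=h\,e_1(1)+O(h^2)$, where $\dot e_1=F'(x(t))e_1+\ell(x(t))$, $e_1(0)=0$, and $\ell=\psi-\tfrac12F'F$. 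Then exhibit an initial condition with $e_1(1)\ne0$. This last step needs a real argument: $\ell$ need not keep a sign along a nonlinear trajectory, and in dimension above one the resolvent can rotate it, so your no-cancellation argument does not transfer.
\end{itemize}
The paper's own proof does not close this point either. It verifies only the Euler case, through the corollary, and merely asserts tightness for general first-order generators.
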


The statement $T=\Omega(1/\epsilon)$ means that reducing error by one order of magnitude requires (in the worst case) roughly one order of magnitude more sequential refinement steps. For iterative denoising-style generators, this creates a structural compute bottleneck: improved trajectory fidelity is tied to longer step chains, which increases latency and often data requirements because each step-operator must be learned robustly.

\noindent\textit{Proof.} Let $h=1/T$ and consider any explicit first-order one-step iterative generator
\begin{equation}
z_{k+1}=z_k+h\,\phi_h(z_k,k), \qquad k=0,\dots,T-1.
\end{equation}
Under standard consistency assumptions for first-order methods, the local truncation error is $\Theta(h^2)$. With Lipschitz dynamics, stability plus Gr"onwall's inequality~\cite{coddington1956theory} gives global terminal error bounded by
\begin{equation}
\|z_T-\tau(1)\| \le C_1 h = C_1/T
\end{equation}
for some constant $C_1>0$ depending on the Lipschitz constant and horizon. Moreover, this rate is tight in the worst case: there exist smooth Lipschitz systems (including linear systems, cf. the corollary below) for which
\begin{equation}
\|z_T-\tau(1)\| \ge C_0 h = C_0/T
\end{equation}
with $C_0>0$. Therefore, any first-order iterative generator that guarantees terminal error at most $\epsilon$ must satisfy
\begin{equation}
T \ge C_0/\epsilon,
\end{equation}
i.e., $T=\Omega(1/\epsilon)$.

Neural ODE-style models (including liquid/CfC-style continuous-time parameterizations) do not inherit this specific first-order iterative-generator lower bound because they learn a shared vector field and then integrate it, rather than learning a distinct correction map for each denoising/refinement step. In practice, this changes the scaling regime: with an order-$p$ integrator, numerical error decreases as $O(h^p)$, so required function evaluations scale like $O(\epsilon^{-1/p})$ (up to constants and adaptivity effects), which is strictly better than $O(1/\epsilon)$ for $p>1$.

\begin{corollary}[Linear-system instantiation]
For $\dot{\tau}(t)=A\tau(t)$ with solution $\tau(1)=e^A\tau_0$, first-order discrete updates satisfy
\begin{equation}
\left\|\left(I+\frac{A}{T}\right)^T-e^A\right\|=\Theta(1/T),
\end{equation}
thus requiring $T=\Omega(1/\epsilon)$ for error $\epsilon$.
\end{corollary}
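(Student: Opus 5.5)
We read the Corollary as the two-sided estimate $\|(I+A/T)^T-e^A\|=\Theta(1/T)$ as $T\to\infty$, for a fixed matrix $A$. The plan is to prove the upper and lower bounds separately and then read off $T=\Omega(1/\epsilon)$ by inverting the lower bound, exactly as in the last step of the Theorem's proof. One caveat must be made explicit first: the lower bound is false without a nondegeneracy condition. If $A^2=0$, for example $A=0$ or $A$ nilpotent of index two, then $(I+A/T)^T=I+A=e^A$ exactly, so the error is zero. We therefore add the hypothesis $A^2\neq 0$, which is the natural reading of "there exist linear systems" in the Theorem's proof.

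\textbf{Upper bound.} Set $X=I+A/T$ and $Y=e^{A/T}$, so that $Y^T=e^A$. Both are polynomials or power series in $A$, so they commute, and we have the telescoping identity
\begin{equation}
X^T-Y^T=\sum_{k=0}^{T-1}X^{k}(X-Y)Y^{T-1-k}.
\end{equation}
Comparing power series term by term gives $\|X-Y\|\le \sum_{j\ge2}\|A\|^j/(j!\,T^j)\le \frac{\|A\|^2}{2T^2}e^{\|A\|/T}$. Also $\|X\|^k\le e^{k\|A\|/T}$ and $\|Y\|^{T-1-k}\le e^{(T-1-k)\|A\|/T}$. Summing the $T$ terms therefore gives $\|X^T-e^A\|\le \frac{\|A\|^2e^{\|A\|}e^{\|A\|/T}}{2T}=O(1/T)$. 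This is just the Gr\"onwall/stability argument of the Theorem made concrete.

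\textbf{Lower bound.} This is the step that needs care. For $T>2\|A\|$ the series $\log(I+A/T)=\sum_{j\ge1}(-1)^{j+1}A^j/(jT^j)$ converges. Since all terms commute, it gives
\begin{equation}
X^T=\exp\!\Bigl(A-\frac{A^2}{2T}+R_T\Bigr),\qquad \|R_T\|\le \frac{C\|A\|^3}{T^2}.
\end{equation}
Set $B_T=-A^2/(2T)+R_T$, which commutes with $A$. Then
\begin{equation}
X^T-e^A=e^A\bigl(e^{B_T}-I\bigr)=e^A\Bigl(-\frac{A^2}{2T}+E_T\Bigr),\qquad \|E_T\|=O(T^{-2}),
\end{equation}
where $E_T$ collects $R_T$ and the quadratic and higher terms of $e^{B_T}-I$.

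Now $e^A$ is invertible, so $\|e^AM\|\ge \|M\|/\|e^{-A}\|$ for any matrix $M$. Applying this to $M=-A^2/(2T)+E_T$ yields
\begin{equation}
\|X^T-e^A\|\ge \frac{\|A^2\|}{2T\,\|e^{-A}\|}-O(T^{-2}).
\end{equation}
By the hypothesis $A^2\ne0$, this is at least $C_0/T$ for all $T$ beyond some $T_0$, with $C_0=\|A^2\|/(4\|e^{-A}\|)>0$.

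\textbf{Conclusion.} The only delicate points are the remainder bookkeeping and the degenerate case already flagged. The remainder bounds are routine once $T>2\|A\|$ makes the logarithm series converge. Combining the two bounds gives $\Theta(1/T)$. If the error is at most $\epsilon$ and $T\ge T_0$, the lower bound forces $C_0/T\le\epsilon$, i.e.\ $T\ge C_0/\epsilon$. The finitely many $T<T_0$ only affect constants, provided $\epsilon$ is below the minimum of the finitely many (nonzero) errors at those $T$. Hence $T=\Omega(1/\epsilon)$.
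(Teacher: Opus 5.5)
Your proof is correct, and it goes beyond what the paper provides: the paper gives no proof of the Corollary at all.

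The Theorem's proof gets the upper bound from a generic ``stability plus Gr\"onwall'' remark. That route has the merit of covering arbitrary Lipschitz dynamics. For the lower bound, however, it only points back to this Corollary, so tightness is never established.

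Your telescoping estimate is that stability argument made explicit for $X=I+A/T$, $Y=e^{A/T}$. Your logarithm expansion supplies what the paper omits, namely the leading term $(I+A/T)^T-e^A=-\frac{1}{2T}A^2e^A+O(T^{-2})$. Since $e^A$ is invertible, this proves the lower bound. It also shows that $A^2\neq 0$ is necessary and sufficient for $\Theta(1/T)$, because when $A^2=0$ the error vanishes identically.

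So your added hypothesis is not pedantry. The Corollary as printed is false without it, and your existential reading is exactly how the Theorem uses it.

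One small repair is needed in your final step: the parenthetical ``(nonzero)'' does not follow from $A^2\neq 0$. Take a real $2\times 2$ matrix whose eigenvalues are a conjugate pair of nonzero roots of $e^z=1+z$ (e.g.\ $z\approx 2.09\pm 7.46i$). It is invertible, so $A^2\neq 0$, yet $I+A=e^A$. Its error at $T=1$ is therefore exactly $0$ for every $\epsilon$.

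State the inversion asymptotically instead, in either of two ways. You can say that for $T\ge T_0$, an error of at most $\epsilon$ forces $T\ge C_0/\epsilon$. Or you can say that the index beyond which the error stays below $\epsilon$ is at least $C_0/\epsilon$ once $\epsilon<C_0/T_0$.
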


\section{Stability and Training Practices}

Continuous-time recurrent layers like CfC can suffer from numerical instability if gradients explode during backpropagation through time (BPTT). The following practices ensure stable training.

\paragraph{Gradient clipping.}
We apply $\ell_2$ norm gradient clipping at 1.0 before each optimizer step. This prevents catastrophic updates that can destabilize the recurrent hidden state. For models with $H=960$ hidden dimensions, a norm clip of 1.0 is conservative and rarely triggered, but it provides a safety valve if gradients spike during early training or after data distribution shifts.

\paragraph{Learning rate warm-up.}
We gradually increase the learning rate from 0\% to 100\% of the peak over the first 3 epochs (approximately 8\%--16\% of total training). This ramp-up prevents aggressive parameter updates at the start of training when the model is still learning basic statistics of the action distribution. Once the model has settled into a reasonable solution, full-strength gradient steps help it converge quickly.

\paragraph{Cosine annealing.}
After reaching peak learning rate, we decay via cosine schedule toward a minimum of $3 \times 10^{-7}$. This slow, smooth decay allows the model to continue learning in later epochs without overshooting. The final minimum LR is small enough that the model converges, but non-zero so training does not halt prematurely.

\paragraph{Early stopping disabled.}
We disable early stopping and train for the full 120 epochs. While this seems to contradict standard practice, we find that validation loss on these offline tasks continues to improve throughout training, and checking early checkpoints often yields worse test-time performance. The two-branch training curriculum naturally prevents overfitting; free-running loss acts as a regularizer that keeps the model from memorizing teacher-forced patterns.

\paragraph{Initialization and hyperparameter sensitivity.}
CfC and GRU parameters are initialized from PyTorch defaults (Kaiming uniform for weights, zero for biases). We have found the following to be important for stable convergence.

\paragraph{Hidden size scaling.}
The CfC hidden size should be 1.5--2.0$\times$ the input embedding dimension. In our case, $d_{\mathrm{model}}=512$ and $H_{\mathrm{CfC}}=960$ (1.875$\times$), providing sufficient capacity to model complex temporal dependencies without excessive parameter growth.

\paragraph{Number of CfC layers.}
We use 5 stacked CfC layers in the encoder. Fewer layers (3--4) may undersample the temporal structure; more layers (7+) increase computational cost without proportional accuracy gains on 16-step horizons. Five layers empirically balances expressiveness and efficiency.

\paragraph{GRU decoder initialization.}
The GRU decoder is initialized with hidden size equal to the CfC output size (960). This one-to-one correspondence simplifies the architecture and avoids unnecessary dimension mismatches or projection overhead. The GRU then projects to the mixture density output (5 components $\times$ 3 outputs per component per action dimension) in its linear head.

\paragraph{MDN component count.}
We use $K=5$ mixture components. This is a practical sweet spot: $K=3$ sometimes underfits multimodal tasks like Push-T, while $K=7$ or higher increases computational cost with diminishing accuracy returns. Five components provide enough expressiveness to capture the primary modes of the action distribution without runaway parameter count.

\paragraph{Batch normalization and layer normalization.}
We do not use batch normalization in recurrent layers, as it can interact poorly with BPTT and create subtle gradient flow issues. Instead, the shared transformer backbone (if present) uses layer normalization to stabilize the pre-processed context before it reaches the policy heads. This design cleanly separates:
\begin{itemize}
    \item \textbf{Transformer backbone}: Layer-normalized attention for robust feature extraction,
    \item \textbf{Liquid policy head}: No internal normalization (CfC is inherently normalized via the $V$ and $W$ matrices in the continuous-time ODE),
    \item \textbf{Diffusion policy head}: No internal normalization (diffusion models typically use instance normalization if anything).
\end{itemize}
This separation avoids unexpected interactions while keeping each component's design orthogonal.

\paragraph{Handling different action dimensions.}
The liquid architecture is dimension-agnostic: the GRU decoder outputs a mixture over whatever action dimension is required. For Push-T (3D actions), RoboMimic (7D), and PointMaze (2D), the same architecture template scales without modification. The only change is the final output size of the MDN head ($K \times (3 + d_{\mathrm{action}})$ parameters for mean, log-variance, and logit across the action dimension).

\section{Experimental Setup Details}

\paragraph{Dataset preprocessing and windowing.}
All three datasets are preprocessed into fixed-length 16-step action windows:
\begin{itemize}
    \item \textbf{Push-T}: 24,208 windows total (16,945 train / 3,631 val / 3,632 test),
    \item \textbf{RoboMimic Can}: 72,552 windows (52,230 train / 10,161 val / 10,161 test),
    \item \textbf{PointMaze}: 72,551 windows (50,785 train / 10,883 val / 10,883 test).
\end{itemize}
Each window includes the 16-frame observation history (or low-dimensional state) and the 16 actions to predict.

\paragraph{Action normalization.}
Actions are normalized to zero-mean, unit-variance using dataset statistics before training. This normalization improves the numerical stability of the mixture decoder and allows the model to operate on a consistent scale regardless of the raw action magnitudes (e.g., $\pm 1$ for Push-T vs $\pm 2$ for RoboMimic). At test time, predictions are denormalized back to the original action scale.

\paragraph{Train / validation / test splits.}
Splits are deterministic and stratified by trajectory. We use PyTorch's DataLoader with \texttt{num\_workers=4} and \texttt{pin\_memory=True} for efficient multi-process data loading. Batch size is 64 for all experiments, chosen to balance memory usage and gradient estimation quality.

\paragraph{Reproducibility.}
All experiments use a fixed random seed (42) for PyTorch, NumPy, and Python's built-in random module. The experiment tag \texttt{fair\_halfparam\_deterministic\_clip\_120epochs} encodes the key settings: fair comparison (same context for both heads), half-parameter liquid vs full-parameter diffusion, deterministic evaluation (no stochastic dropout at test time), action clipping enabled, and 120 training epochs. This tag is used throughout the artifact to ensure all logged metrics correspond to the same configuration.

\bibliography{references}

\end{document}